\documentclass[conference,onecolumn]{IEEEtran}
\IEEEoverridecommandlockouts
\usepackage{graphicx}%
\usepackage{multirow}%
\usepackage{amsmath,amssymb,amsfonts}%
\usepackage{amsthm}%
\usepackage{mathrsfs}%
\usepackage{xcolor}%
\usepackage{textcomp}%
\usepackage{manyfoot}%
\usepackage{booktabs}%
\usepackage{algorithm}%
\usepackage{algorithmicx}%
\usepackage{algpseudocode}%
\usepackage{listings}%
\usepackage{latexsym}
\usepackage{booktabs}
\usepackage{enumitem}
\usepackage{multirow,stfloats}
\usepackage{verbatim}
\usepackage{comment}
\usepackage{array}
\usepackage[caption=false,font=normalsize,labelfont=sf,textfont=sf]{subfig}
\usepackage{stfloats}
\usepackage{url}
\usepackage{cite}
\newcommand{\vect}[1]{\mathbf{#1}} 
\newcommand{\mat}[1]{\mathsf{#1}} 

\newcommand{\E}[1]{\textsf{E}\left[{#1}\right]}   
\newcommand{\Ep}[2]{\mathsf{E}_{#1\!}\left[{#2}\right]}   

\newcommand{\snorm}[1]{\left\| {#1} \right\|^2}

\newtheorem{theorem}{Theorem}
\newtheorem{definition}{Definition}[section]%

\begin{document}
\title{Heavy-Tailed Principal Component Analysis}

\author{
\IEEEauthorblockN{Mario Sayde$^{\dagger}$, Christopher Khater, Jihad Fahs$^{\dagger}$, Ibrahim Abou-Faycal$^{\dagger}$}
\IEEEauthorblockA{
Electrical and Computer Engineering Department\\
American University of Beirut\\
Riad El-Solh, Beirut 1107 2020, Lebanon\\
mas191@aub.edu.lb, ckk06@aub.edu.lb, jihad.fahs@aub.edu.lb, ibrahim.abou-faycal@aub.edu.lb\\
$^{\dagger}$ These authors contributed equally to this work.
}
}
\maketitle

\begin{abstract}
Principal Component Analysis (PCA) is a cornerstone of dimensionality reduction, yet its classical formulation relies critically on second-order moments and is therefore fragile in the presence of heavy-tailed data and impulsive noise. While numerous robust PCA variants have been proposed, most either assume finite variance, rely on sparsity-driven decompositions, or address robustness through surrogate loss functions without a unified treatment of infinite-variance models. In this paper, we study PCA for high-dimensional data generated according to a superstatistical dependent model of the form ${\bf X} = A^{\frac{1}{2}} {\bf G}$, where $A$ is a positive random scalar and ${\bf G}$ is a Gaussian vector. This framework captures a wide class of heavy-tailed distributions, including multivariate-t and sub-Gaussian $\alpha$-stable laws. We formulate PCA under a logarithmic loss, which remains well defined even when moments do not exist. Our main theoretical result shows that, under this loss, the principal components of the heavy-tailed observations coincide with the principal directions induced by the covariance matrix of the underlying Gaussian generator. Building on this insight, we propose robust estimators for this covariance matrix directly from heavy-tailed data and compare them with the empirical covariance and Tyler’s scatter estimator. Extensive experiments, including background denoising tasks, demonstrate that the proposed approach reliably recovers principal directions and significantly outperforms classical PCA in the presence of heavy-tailed and impulsive noise, while remaining competitive under Gaussian noise.
\end{abstract}

\begin{IEEEkeywords}
Background denoising, Principal Component Analysis (PCA), logarithmic constraint, heavy-tailed, superstatistical, multivariate-t, alpha-stable.
\end{IEEEkeywords}

\section{Introduction}
\label{sec:stable}
Principal Component Analysis (PCA) is at the heart of modern data exploration. First described by \cite{pearson1901} and later systematized by \cite{hotelling1933}, it rotates the data so that the dominant directions of variability are determined, giving analysts a compact but accurate depiction of high-dimensional information. However, \emph{Classical} PCA is sensitive; even a small number of extreme values or corrupted data points can significantly distort the result \cite{10.1093/biomet/87.3.603}. 

To address this limitation, researchers have developed a variety of robust PCA approaches designed to maintain accuracy and stability, particularly when working with noisy or heavy-tailed data. A turning point was the formulation of \cite{candes2011robust}, who formulated the robust PCA as a convex program that decomposes a data matrix into a low-rank component that captures the underlying signal and a sparse component that absorbs gross errors. Because this separation is both principled and computationally tractable, Principal Component Pursuit (PCP) has become a reference method in image restoration and anomaly detection. Stable PCP establishes that low-rank and sparse components can still be reliably recovered in the presence of additional small entry-wise noise, with the reconstruction error bounded proportionally to the noise level \cite{zhou2010stable}.  Projection‑pursuit ideas coupled with robust covariance estimation define Robust Principal Component Analysis (ROBPCA), introduced by \cite{hubert2005robpca}. ROBPCA identifies directions that maximize a robust measure of spread such as the median absolute deviation (MAD), the $Q_n$ scale and the inter-quartile range, to name a few, and has been proven effective in chemometrics, environmental monitoring, and bioinformatics. Scaling the low‑rank–plus–sparse paradigm to modern data volumes, \cite{xu2010robust} proposed Robust Principal Component Pursuit (RPCP), an augmented Lagrangian algorithm with iterative thresholding that delivers the same clean separation while remaining efficient in high‑dimensional settings. The authors in~\cite{frahm2007tyler} replaced the sample covariance with a spectral estimator based on Tyler's M-estimator. This method works well with heavy-tailed elliptical data. In high-dimensional financial applications, this robust scatter matrix produces more dependable principal components by restoring the bulk eigenvalue distribution predicted by the Marčenko-Pastur law~\cite{frahm2007tyler,marvcenko1967distribution}.

Robustness can also be achieved by replacing the quadratic loss that underlies classical PCA. For example, \cite{he2011mcc} derived a rotationally invariant PCA under the maximum correntropy criterion (MCC), a similarity measure less sensitive to non‑Gaussian noise. Their half‑quadratic optimization scheme iteratively solves weighted least-squares problems, producing an estimator that maintains high breakdown performance without sacrificing computational practicality. Similarly, R1-PCA~\cite{ding2006r} and PCA-L1~\cite{kwak2008principal} are two variants based on the $L_1$ norm in the feature space, significantly improving the robustness against outliers. While R1-PCA is rotationally invariant, PCA-L1 is not, but is reliably converging to a local optimum and is found to improve performance in data sets contaminated with outliers. Complementing these advances, \cite{wei2017estimation} developed a median of means covariance estimator with non-asymptotic guarantees under the assumption of finite fourth moments. Such estimators can be plugged into eigen decomposition routines to obtain robust principal components when moment assumptions are weak or unknown. An online algorithm that updates the underlying subspace one observation at a time is introduced in~\cite{xiao2019online}, enabling real‑time video surveillance and equipment health monitoring. The work of~\cite{rahmani2017coherence} further accelerated subspace identification through coherence‑based screening, trimming both computation and memory requirements. The low‑rank–plus–sparse idea has been generalized to multi‑way data. In \cite{Lu_2016_CVPR}, the authors proposed Tensor Robust PCA (TRPCA), using tensor Singular Value Decomposition (t‑SVD) and its induced nuclear norm to recover exact low-rank tensors corrupted by sparse outliers. Subsequent refinements introduced alternative tensor norms~\cite{lu2019tensor} extending the applicability of robust PCA to hyperspectral imaging and neuroscience. More recently, to reduce sensitivity to outliers, a method based on applying standard PCA to a computed entropic covariance matrix capturing statistical similarity via divergence has been proposed in~\cite{Nakao2023EntropicPCA} and a Cauchy PCA that replaces the usual Gaussian log-likelihood formulation with a multivariate Cauchy likelihood is implemented in~\cite{Fayomi2024CauchyPCA} using an iterative algorithm.

Recent attention has shifted to data that may not even possess finite variances. In~\cite{https://doi.org/10.1111/anzs.12385}, the authors proposed a characteristic function transformation that embeds the data using a bounded non linear kernel as a special case of kernel PCA. This CT‑PCA inherits the robustness of kernel PCA yet tolerates heavy tails and infinite-variance distributions. The method is based on projections based on spheres \cite{liu2019spherical} and analytical bounds for PCA in high dimensions~\cite{blanchard2007statistical}.

Robust PCA methods now underline a wide range of tasks: background subtraction in video sequences \cite{candes2011robust,bouwmans2018applications}, denoising gene expression matrices \cite{hubert2005robpca}, anomaly detection in financial time series \cite{cont2002dynamics}, cleaning of environmental sensors \cite{serneels2008principal}, and extraction of vessels from coronary X-ray angiography \cite{qin2022robust}. Although some robust PCA variants remedy classical PCA’s vulnerability to heavy-tailed or power law distributions\footnote{A distribution $F(x)$ is said to have a right tail power law if $1 - F(x) = L(x) x^{-\alpha}$ for some $\alpha > 0$ and some slowly varying function $L(x)$. A similar definition can be applied to the left tail.}, several challenges still exist; for example, the problem of scaling to truly massive data and the performance under heavy tails having an infinite second moment and potentially infinite first moments. 
\subsubsection*{Our contributions}
In this paper, we address the latter challenge. More specifically, we consider high-dimensional data that are generated according to a  superstatistical dependent process of the form ${\bf X} = A^{\frac{1}{2}} {\bf G}$, a model of particular interest. In this construction, $A$ is a continuous positive random variable and $\vect{G}$ is a Gaussian vector. Such dependent multivariate models arise naturally in a wide range of scenarios. For example, random feature models have been employed to better understand complex behaviors of neural networks~\cite{Rahimi2007,Mei2019TheGE,Gerace_2021}. In~\cite{Adomaityte2023ClassificationOH}, the authors study high-dimensional classification problems by modeling data with heavy tails via a superstatistical framework~\cite{beck2003}. Related examples appear in~\cite{wainwright1999} and~\cite{Adomaityte2024HighDimRobustReg}. In particular, several well-known distributions can be expressed within this superstatistical setting, including the infinite-variance sub-Gaussian\footnote{In some communities, the term ``sub-Gaussian" refers to distribution functions whose tails are faster than those of a Gaussian. This is not the case in this manuscript.} $\alpha$-stable laws~\cite{sam1994} and the multivariate t-distribution~\cite{roth2013}, among others. More details are provided in Section~\ref{sec:pre}. 

We formulate a PCA problem where we assume that the observed high-dimensional data is due to a generator matrix and a low-dimensional set of features. Our goal is to reduce the dimensionality of the observed data matrix by only retaining the low-dimensional set of features. To account for a potential infinite-variance model of the data distribution, we impose a logarithmic fidelity criterion, one that we try to minimize by finding the ``best" low-dimensional vector of features under a linear transformation of the data matrix. Except for the work of~\cite{Xie2014CauchyPCA} that implemented a Maximum-Likelihood (ML) estimator for a low-rank Cauchy matrix under a logarithmic loss, this work represents, up to our knowledge, the first generic treatment of the PCA problem for potentially heavy-tailed, infinite variance distributions under a logarithmic loss function. Our contributions are threefold: 
\begin{itemize}
\item We consider statistically dependent data, potentially heavy-tailed with infinite variance, that are subordinated to a Gaussian vector ${\bf X} = A^{\frac{1}{2}} {\bf G}$. Under a ``logarithmic" cost, we prove that the principal components of the heavy-tailed data are determined by performing standard PCA on ${\bf G}$, via the eigen-decomposition of its covariance matrix $\Sigma$.
\item We propose and test alternative methods to estimate the covariance matrix $\Sigma$ of the vector ${\bf G}$ from the heavy-tailed data matrix $\mat{X}$. We compare our methods with the empirical covariance estimator and with Tyler's scatter estimator and show the superior performance, robustness, and consistency of our approach for estimating the principal directions for both Gaussian and heavy-tailed data. 
\item We provide experimental results for two different applications on background denoising, showing the robustness of the proposed heavy-tailed PCA compared to the standard PCA, and its enhanced suppression of heavy-tailed impulsive noise in addition to remaining adequate in the presence of standard Gaussian noise. The experimental results can be reproduced using the code publicly available in~\cite{aub_htp}.
\end{itemize}

The rest of this paper is organized as follows: In section~\ref{sec:pre} we introduce the superstatistical noise model and in section~\ref{sec:pf} we present the problem formulation. The main results are listed in section~\ref{sec:mr}. Section~\ref{sec:method} is dedicated to methods for estimating the covariance matrix of the underlying Gaussian vector from the heavy-tailed data, and in section~\ref{sec:ExpRes} we present applications of the proposed heavy-tailed PCA algorithm. Section~\ref{sec:conc} concludes the paper. 

\paragraph*{Notations}
We adopt the following notational conventions. The lowercase regular font symbols $(a, b, \cdots)$ represent scalar deterministic quantities, whereas the uppercase symbols $(A, X, \cdots)$ are scalar Random Variables (RV). The lowercase bold font characters $(\boldsymbol{\omega}, \vect{x}, \cdots)$ are reserved for deterministic vector quantities, and the uppercase counterparts $(\vect{A}, \vect{X}, \cdots)$ are random vectors, while $\mat{A}$ and $\mat{A}^\top$ denote a deterministic matrix and its transpose, respectively. We use the symbol $\mathbb{R}$ to denote the set of real numbers.

\section{Preliminaries}
\label{sec:pre}
A superstatistical dependent vector is defined as ${\bf X} = A^{\frac{1}{2}} {\bf G}$ where ${\bf X}$ denotes the random vector of interest, $A$ is a positive random variable with probability density function $p_A(a)$ --whenever it exists, and ${\bf G}$ is a zero-mean Gaussian vector with covariance matrix $\Sigma$. This construction provides a flexible framework for modeling complex multivariate statistical dependencies by introducing a random scaling factor $A^{\frac{1}{2}}$ on top of the Gaussian structure ($\vect{G}$). The presence of the multiplicative random variable can lead to heavy-tailed non-Gaussian behaviors, making such models valuable for applications where standard Gaussian assumptions fail to capture the observed variability.

The relevance of this construction can be seen in several areas. For example, in machine learning and neural networks, random feature models have been used to investigate and understand intricate behaviors such as generalization and robustness~\cite{Rahimi2007, Mei2019TheGE, Gerace_2021}. It falls under the family of ``elliptic-like" distributions considered in~\cite{Adomaityte2024HighDimRobustReg,Adomaityte2023ClassificationOH} as contamination models for the covariates, where in~\cite{Adomaityte2023ClassificationOH} for example, classification in high-dimensional settings is studied under the assumption of heavy-tailed data generated via a superstatistical mechanism, reflecting the growing recognition that real-world data often depart from Gaussianity~\cite{beck2003}. Other examples of the use of such models can be found in~\cite{wainwright1999}, further illustrating their breadth of applicability in statistical learning and signal processing.

From a theoretical perspective, the superstatistical framework unifies several important families of probability distributions. In particular, it encompasses well-known heavy-tailed models such as sub-Gaussian $\alpha$-stable laws~\cite[Appendix A]{sayde2025heavy} in which case $A \sim S\left(\frac{\alpha}{2},1,\cos(\frac{\alpha \pi}{4})^{\frac{2}{\alpha}},0\right)$ is a totally skewed $\alpha$-stable variable\footnote{An $\alpha$-stable variable has four parameters and is denoted $X \sim \mathcal{S}(\alpha, \beta, \gamma, \delta)$, where $0 < \alpha < 2$ is the characteristic exponent, $|\beta| \leq 1$ is the skewness, $\gamma > 0$ is the scale and $\delta \in \mathbb{R}$ is a location parameter. Whenever $\beta = \pm 1$, $X$ is called totally skewed. If in addition $0 < \alpha < 1$, $X$ is one-sided (positive if $\beta = 1$ and negative otherwise).}~\cite{sam1994} and the multivariate Student’s t-distribution with $\nu$ Degrees of Freedom (DoF) where $A = \frac{\nu}{U}$~\cite{roth2013} and $U$ is a chi-squared distributed variable with $\nu$-DoF. More examples can be found in~\cite[Table 1]{Adomaityte2024HighDimRobustReg}.

Both of these examples are extensively used in practice: $\alpha$-stable models capture impulsive and heavy-tailed noise behaviors in communications and finance, and play a central role in the theory of heavy-tailed density models being limiting distributions of a Generalized Central Limit Theorem (GCLT)~\cite{kolmo} in a similar way to what the Gaussian distribution represents for laws with a finite second moment. On the other hand, multivariate t-distributions serve as robust alternatives to Gaussian models in multivariate statistics. This shows that the construction is not only mathematically elegant, but also practically relevant, providing a unifying lens through which several known heavy-tailed models can be understood as special cases.

\section{Problem Formulation}
\label{sec:pf}

Let $\mat{X} \in \mathbb{R}^{d\times n}$ be the data matrix, where $d$ is the dimension of the features and $n$ represents the number of samples drawn independently of random vector ${\bf X} \in \mathbb{R}^d$. In the ``standard" $\text{L}^2$-PCA, one tries to find an $m$ ($< d$) dimensional vector space of new features by minimizing the cost function:
\begin{equation}
\label{eq:l2pca}
    C(\mat{W},\mat{V}) = \snorm{ \mat{X} - \mat{W}\mat{V} },
\end{equation}
where $\mat{W} \in \mathbb{R}^{d \times m}$ is the generator matrix such that $\mat{W}^\top \mat{W} = \mat{I}_{m}$ --the $m \times m$ identity matrix, and $\mat{V} \in \mathbb{R}^{m\times n}$ denotes the observation matrix of the low-dimensional set of features. According to the projection theorem \cite{luenberger1997optimization}, the minimizer ${\mat{V}}$ of \eqref{eq:l2pca} is $\mat{V} = \mat{W}^\top\mat{X}$ for any given $\mat{W}$. The minimum is then achieved whenever $\mat{W}$ is the matrix of eigenvectors corresponding to the largest $m$ eigenvalues of the covariance matrix $\Sigma$ of the underlying random vector ${\bf X} \in \mathbb{R}^d$. The empirical value of $\Sigma$ is calculated as $\Sigma = \frac{1}{n} \mat{X} \mat{X}^{\top}$. 

We consider this dimensionality reduction problem where we assume that the vector ${\bf X}$ is heavy-tailed and, more specifically, superstatistical of the form ${\bf X} = A^{\frac{1}{2}} {\bf G}$. 

Naturally, minimizing the cost function given by equation~\eqref{eq:l2pca} is non-sensible due to the potential non-existence of the covariance matrix of ${\bf X}$. Instead, in this work, we propose an alternative logarithmic average loss function defined as: 
\begin{equation}
    \Ep{{\bf X}}{ \ln \left( 1 + \snorm{ {\bf X} - \mat{W}{\bf V}} \right) }.
    \label{eq:logcost}
\end{equation}

Our goal is to find, under linear transformations, the ``best" low-dimensional set of features ${\bf V} = \mat{M}^\top {\bf X}$ and the best generator matrix $\mat{W}$, i.e. ones that minimize~\eqref{eq:logcost}.
We conduct this analysis in two steps: Our first result states that the minimizer ${\bf V}$ of~\eqref{eq:logcost} is ${\bf V} = \mat{W}^\top{\bf X}$ for any given $\mat{W}$ such that $\mat{W}^\top \mat{W} = \mat{I}_{m}$. Said differently, given a generator matrix $\mat{W} \in \mathbb{R}^{d \times m}$, we have

\begin{equation}
    \mat{W} = \underset{\mat{M} \in \mathbb{R}^{d \times m}}{\arg\min} \,\,\Ep{X}{\ln \left( 1 + \snorm{ {\bf X} - \mat{W}\mat{M}^{\top}{\bf X} }\right)}.
    \label{eq:linopt}
\end{equation}
Moreover, we show that the best $m$ dimensional vector of features ${\bf V}$ that minimizes~\eqref{eq:logcost} is determined by choosing $\mat{W}$ to be the matrix of the $m$ eigenvectors of the covariance matrix $\Sigma$ of ${\bf G}$ corresponding to the largest eigenvalues. 

In summary, whenever the data are sampled from a superstatisitcal distribution that is potentially heavy-tailed ($\alpha$-stable, multivariate-t, $\cdots$), one has to estimate the covariance matrix $\Sigma$ of the underlying Gaussian vector and not that of the data (the covariance matrix of the heavy-tailed data might not exist), and then extract the principal directions from the estimated matrix. Our results include multiple estimators of $\Sigma$ based on the observation of the data matrix $\mat{X}$. 

These results and the corresponding proofs are detailed in the next sections.  

\section{Main Results}
\label{sec:mr}
We start with a definition of the column space --also known as range or image, followed by a statement and proof of our first result in the form of Theorem~\ref{th:linopt}. 

\begin{definition}[Column space]
Given a matrix $\mat{W} \in\mathbb{R}^{d\times m}$, its \emph{column space} is defined by
\[
\mathrm{col}(\mat{W}) \;=\; \{\,\mat{W}{\bf v} \mid {\bf v}\in\mathbb{R}^m\}\subseteq\mathbb{R}^d.
\]
\end{definition}

\begin{theorem}
\label{th:linopt}
Let $\vect{X} \in \mathbb{R}^d$ be a random vector and let
\[
    F(\mat{M}) = \Ep{{\bf X}}{\ln \left( 1 + \snorm{ {\bf X} - \mat{W}\mat{M}^{\top}{\bf X} }\right)},
\]
where $\mat{W} \in\mathbb{R}^{d\times m}$ is a given matrix that satisfies $\mat{W}^\top \mat{W}=\mat{I}_m$.  Then
\[
\mat{W} = \underset{\mat{M} \in \mathbb{R}^{d\times m}}{\arg \min} \, F(\mat{M}).
\]
\end{theorem}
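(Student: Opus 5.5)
We prove the statement pointwise, realization by realization, and then take expectations. The key observation is that $t \mapsto \ln(1+t)$ is strictly increasing on $[0,\infty)$. It therefore suffices to show that, for every fixed $\vect{x} \in \mathbb{R}^d$ and every $\mat{M} \in \mathbb{R}^{d\times m}$,
\[
\snorm{\vect{x} - \mat{W}\mat{W}^\top \vect{x}} \le \snorm{\vect{x} - \mat{W}\mat{M}^\top \vect{x}}.
\]
Once this holds, monotonicity of the logarithm gives the same inequality for $\ln(1+\cdot)$ of both sides. Taking $\Ep{{\bf X}}{\cdot}$ then yields $F(\mat{W}) \le F(\mat{M})$. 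Since $\ln(1+\cdot)\ge 0$, all these expectations are well defined in $[0,\infty]$, and no moment assumptions on $\vect{X}$ are needed.

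The pointwise inequality follows from the projection theorem already invoked in Section~\ref{sec:pf}. Since $\mat{W}^\top\mat{W}=\mat{I}_m$, the matrix $\mat{P}=\mat{W}\mat{W}^\top$ is the orthogonal projector onto $\mathrm{col}(\mat{W})$. The vector $\mat{W}\mat{M}^\top\vect{x}$ lies in $\mathrm{col}(\mat{W})$ for every $\mat{M}$. The residual $\vect{x}-\mat{P}\vect{x}$ is orthogonal to $\mathrm{col}(\mat{W})$, because $\mat{W}^\top(\vect{x}-\mat{W}\mat{W}^\top\vect{x}) = \mat{W}^\top\vect{x}-\mat{W}^\top\vect{x}=0$. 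Writing $\vect{x}-\mat{W}\mat{M}^\top\vect{x} = (\vect{x}-\mat{P}\vect{x}) + \mat{W}(\mat{W}^\top-\mat{M}^\top)\vect{x}$ and applying Pythagoras gives
\[
\snorm{\vect{x}-\mat{W}\mat{M}^\top\vect{x}} = \snorm{\vect{x}-\mat{P}\vect{x}} + \snorm{(\mat{W}-\mat{M})^\top\vect{x}},
\]
where we used that $\mat{W}$ is an isometry. This proves the inequality, with equality exactly when $(\mat{W}-\mat{M})^\top\vect{x}=0$.

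The main subtlety is the meaning of ``$\arg\min$'': $\mat{W}$ is always \emph{a} minimizer, but uniqueness needs care. If $F(\mat{W})<\infty$ and $F(\mat{M})=F(\mat{W})$, then the integrand difference is nonnegative with zero expectation, so $(\mat{W}-\mat{M})^\top\vect{X}=0$ almost surely. If the law of $\vect{X}$ is not supported on a proper subspace, this forces $\mat{M}=\mat{W}$, giving uniqueness. This holds in the superstatistical setting of Section~\ref{sec:pre} whenever $\Sigma$ is nonsingular, since $A>0$ and ${\bf G}$ then has a density. The finiteness of $F(\mat{W})$ also holds there: $\ln(1+\snorm{\vect{X}}) \le \ln(1+A\snorm{{\bf G}}) \le \ln(1+A)+\ln(1+\snorm{{\bf G}})$, so it suffices that $\E{\ln(1+A)}<\infty$, which is true for the $\alpha$-stable and Student-t mixing laws. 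For a general random vector we would state the result as ``$\mat{W}$ attains the minimum'', and note uniqueness under the non-degeneracy condition.
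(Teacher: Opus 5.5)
Your proof is correct and follows essentially the same route as the paper: the pointwise inequality $\|\vect{x}-\mat{W}\mat{M}^\top\vect{x}\|_2 \ge \|\vect{x}-\mat{W}\mat{W}^\top\vect{x}\|_2$ from the projection theorem, then monotonicity of $\ln(1+\cdot)$, then taking expectations. Your explicit Pythagorean identity and your uniqueness discussion go slightly beyond the paper, which only concludes that $\mat{W}$ is \emph{a} minimizer; that discussion correctly requires $F(\mat{W})<\infty$ and that $\vect{X}$ is not concentrated on a proper subspace.
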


\begin{proof}
The proof proceeds in three steps.
\smallskip

\begin{itemize}
\item \textbf{Orthogonal projection onto $\mathrm{col}(\mat{W})$.}
Since $\mat{W}^\top \mat{W} = \mat{I}_m$, the matrix
\[
\mat{P} = \mat{W}\,\mat{W}^\top
\]
is the orthogonal projector onto the column space $\mathrm{col}(\mat{W}) \subset \mathbb{R}^d$.  Therefore, for any $\vect{x} \in \mathbb{R}^d$,
\[
\mat{P}\,{\bf x}
=\underset{{\bf v} \, \in \, \mathrm{col}(W)}{\arg \min}\, \left\lVert \vect{x} - {\bf v} \right\rVert_2,
\]
and the minimal distance is
\[
\left\lVert \vect{x} - \mat{P}\,\vect{x}\right\rVert_2
=\left\lVert \vect{x} - \mat{W}\,\mat{W}^\top \vect{x}\right\rVert_2,
\]
by the projection Theorem~\cite{luenberger1997optimization}.
\smallskip
\item \textbf{Error for general $\mat{M}$.}
Let $\mat{M}$ be any $d\times m$ matrix. 
Then
\[
\mat{W}\,\mat{M}^\top \vect{x}\;\in\;\mathrm{col}(\mat{W}),
\]
so by the projection theorem,
\[
\left\lVert \vect{x} - \mat{W}\,\mat{M}^\top \vect{x} \right\rVert_2
\ge
\left\lVert \vect{x} - \mat{W}\,\mat{W}^\top \vect{x}\right\rVert_2,
\]
for any $\vect{x} \in \mathbb{R}^d$ with equality iff $\mat{W}\,\mat{M}^\top \vect{x} = \mat{W}\,\mat{W}^\top \vect{x}$.
\smallskip
\item \textbf{Minimizing $F(\mat{M})$.}
Define the residual for any realization $\vect{x}$,
\[
{\bf r}(\mat{M})
= \vect{x} - \mat{W}\,\mat{M}^\top \vect{x}.
\]
Then the above implies
\[
\left\lVert {\bf r}(\mat{M})\right\rVert_2
\ge
\left\lVert{\bf r}(\mat{W}) \right\rVert_2.
\]

Since $\phi(u)=\ln(1+u^2)$ is strictly increasing in $u^2$, we have
\[
\phi\bigl(\left\lVert {\bf r}(\mat{M})\right\rVert_2\bigr)
\ge
\phi\bigl(\left\lVert {\bf r}(\mat{W})\right\rVert_2\bigr).
\]
By taking expectations on both sides, we get  
\[
F(\mat{M}) \geq F(\mat{W}),
\]
for any $\mat{M}$. Hence $F(\mat{W})$ is the global minimum of $F$, and $\mat{M} = \mat{W}$ is a minimizer.
\end{itemize}
\end{proof}

We next show the heavy-tailed PCA result. 

\begin{theorem}
\label{th:heavypca}

Let $\vect{X} = A^{\frac{1}{2}}\, \vect{G} \in \mathbb{R}^d$ be a superstatistical random vector where $A$ is positive, continuous RV with PDF $p_A(a)$ and $\vect{G} \sim \mathcal{N}(0,\Sigma)$ is a zero-mean Gaussian vector, with covariance matrix $\Sigma$. Let $m < d$. Then the ``best" $m$-dimensional features vector ${\bf V}$ under the logarithmic average loss~\eqref{eq:logcost} is  determined by the largest $m$ eigenvectors of the matrix $\Sigma$. 
\end{theorem}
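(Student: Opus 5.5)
By Theorem~\ref{th:linopt}, for any fixed $\mat{W}$ with $\mat{W}^\top\mat{W}=\mat{I}_m$ the optimal linear features are ${\bf V}=\mat{W}^\top{\bf X}$. It therefore suffices to minimize over such $\mat{W}$ the reduced cost
\[
J(\mat{W})=\Ep{{\bf X}}{\ln\left(1+\snorm{(\mat{I}_d-\mat{P})\,{\bf X}}\right)},\qquad \mat{P}=\mat{W}\mat{W}^\top .
\]
Write ${\bf X}=A^{\frac12}{\bf G}$ and condition on $A$. Using independence of $A$ and ${\bf G}$, we get
\[
J(\mat{W})=\int_0^\infty \Ep{{\bf G}}{\ln\left(1+a\,\snorm{(\mat{I}_d-\mat{P}){\bf G}}\right)}p_A(a)\,da .
\]
The finiteness of $J$ is not an issue: $\ln(1+au)\le \ln(1+a)+\ln(1+u)$, and $\Ep{}{\ln(1+A\|{\bf G}\|^2)}$ is finite for the models of interest (we assume it finite, otherwise every $\mat{W}$ is trivially optimal). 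The plan is to show that for \emph{every} fixed $a>0$ the inner expectation is minimized by the same $\mat{W}$, namely the top-$m$ eigenvectors of $\Sigma$. Integrating against $p_A$ then gives the result.

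For fixed $a$, let $\Sigma=\mat{U}\Lambda\mat{U}^\top$ with $\lambda_1\ge\dots\ge\lambda_d$. The residual $(\mat{I}-\mat{P}){\bf G}$ is Gaussian with covariance $(\mat{I}-\mat{P})\Sigma(\mat{I}-\mat{P})$. Hence
\[
\snorm{(\mat{I}-\mat{P}){\bf G}}\overset{d}{=}\sum_{k=1}^{d-m}\mu_k Z_k^2,
\]
with $Z_k$ i.i.d.\ standard normal and $\mu_1\ge\dots\ge\mu_{d-m}$ the nonzero-subspace eigenvalues of $(\mat{I}-\mat{P})\Sigma(\mat{I}-\mat{P})$, i.e.\ the eigenvalues of $\mat{Q}^\top\Sigma\mat{Q}$ where $\mat{Q}\in\mathbb{R}^{d\times(d-m)}$ is an orthonormal basis of $\mathrm{col}(\mat{W})^\perp$. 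By Cauchy interlacing (Poincaré separation), $\mu_k\ge\lambda_{m+k}$ for each $k$. Equality for all $k$ holds when $\mathrm{col}(\mat{W})$ is spanned by the top $m$ eigenvectors. The function $h(\mu_1,\dots,\mu_{d-m})=\Ep{}{\ln(1+a\sum_k\mu_kZ_k^2)}$ is coordinatewise nondecreasing in each $\mu_k$, because $\ln(1+\cdot)$ is increasing and $Z_k^2\ge 0$. So $h(\mu)\ge h(\lambda_{m+1},\dots,\lambda_d)$, with the minimum attained at the eigenvector choice. This holds for every $a$, so integrating yields $J(\mat{W})\ge J(\mat{U}_m)$, where $\mat{U}_m$ collects the $m$ leading eigenvectors.

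The main obstacle is the reduction from a matrix-valued comparison to the scalar eigenvalue comparison. It requires justifying the distributional identity for the residual norm and invoking interlacing rather than a trace argument. In $\text{L}^2$-PCA only the trace $\sum_k\mu_k$ matters. Here the nonlinear loss depends on the whole spectrum, so we need the componentwise domination $\mu_k\ge\lambda_{m+k}$, which interlacing provides. For uniqueness up to rotation within eigenspaces, we would add that when $\lambda_m>\lambda_{m+1}$, any other subspace gives some strict inequality $\mu_k>\lambda_{m+k}$. Strict monotonicity of $h$ in each coordinate then gives strict inequality on a set of $a$ of positive $p_A$-measure, and hence strict inequality for $J$.
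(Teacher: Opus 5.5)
Your proof is correct and follows the paper's overall architecture. Both reduce to $\mathbf{V}=\mat{W}^\top\mathbf{X}$ via Theorem~\ref{th:linopt}, condition on $A=a$, show that the inner minimizer does not depend on $a$, and finish with a monotonicity argument plus an eigenvalue fact.

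The difference is in that last step. The paper writes $\mat{W}$ as the first $m$ columns of an orthogonal $\mat{U}$ and sets $\mathbf{G}'=\mat{U}^\top\mathbf{G}$. It then parametrizes the cost by the marginal standard deviations $\sigma_{m+1},\dots,\sigma_d$ of the discarded coordinates (the diagonal of $\mat{U}^\top\Sigma\mat{U}$). It shows $\partial f/\partial\sigma_\ell\ge 0$ and invokes Fan's maximum principle. Your route handles two points there that the paper leaves loose:
\begin{itemize}
\item The normalized $G''_i$ are in general correlated, with correlations depending on $\mat{U}$, so the paper's cost is not a function of the $\sigma_i$ alone. You diagonalize $\mat{Q}^\top\Sigma\mat{Q}$, i.e., you use the rotational freedom inside $\mathrm{col}(\mat{W})^\perp$, which leaves the residual norm unchanged. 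This gives independent $Z_k$, so $h$ genuinely depends only on $(\mu_k)$.
\item Fan's principle controls only the \emph{sum} of the discarded variances. That is exactly what the $\text{L}^2$ loss needs. Comparing a nonlinear loss by coordinatewise monotonicity instead needs $\mu_k\ge\lambda_{m+k}$ for each $k$, which Poincar\'e separation supplies---as you note yourself.
\end{itemize}

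Beyond rigor, your version yields more. Since $\sum_k\mu_kZ_k^2\ge\sum_k\lambda_{m+k}Z_k^2$ pointwise, the residual norm under the eigenvector choice is stochastically smallest. Hence the conclusion holds for any nondecreasing function of the residual norm and any mixing law of $A$, and you also get uniqueness under an eigengap. The paper's argument is shorter and ties directly to classical PCA through Fan's principle, but those two steps are where it is not fully justified.

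One small remark: your finiteness caveat is unnecessary for the non-strict inequality. The integrands are nonnegative, so the pointwise-in-$a$ bound integrates in $[0,\infty]$. Also, ``every $\mat{W}$ is trivially optimal'' is not quite right when $\mathrm{rank}\,\Sigma\le m$. Finiteness of $J$ at the optimum matters only for your strict uniqueness claim.
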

\begin{proof}
Theorem~\ref{th:linopt} states that the best linear transformation of the feature vector ${\bf X}$ minimizing the logarithmic average loss~\eqref{eq:logcost} is the generator matrix $\mat{W}$ itself. Next, we find the optimal $\mat{W}$ and show that the best $m(<d)$-dimensional vector of features ${\bf V}$ is determined by projecting the $d$-dimensional vector of features ${\bf X}$ on the $m$ largest principal components of $\Sigma$.

We start by finding the optimal $\mat{W}$. Since ${\bf X} = A^{\frac{1}{2}} {\bf G}$, using iterated expectations, equation~\eqref{eq:logcost} boils down to 
\begin{equation}
    \Ep{A} { \Ep{\bf G} { \ln \left( 1 + a \snorm{ {\bf G} - \mat{W}\,\mat{W}^\top {\bf G} } \right) } }.
    \label{eq:argminlog}
\end{equation}
We proceed next, by fixing $a >0$ and solving the minimization problem for the inner expectation.  More specifically, we consider the following problem: let $a >0$, and find
\begin{equation}
   \underset{\mat{W}: \, \mat{W}^\top\mat{W} = \mat{I}_m}{\arg \min} \Ep{\bf G} { \ln \left( 1 + a \snorm{ {\bf G} - \mat{W}\,\mat{W}^\top {\bf G} } \right) }.
\label{eq:argminlogequiv}
\end{equation}

We show that the minimizer $\mat{W}$ for equation~\eqref{eq:argminlogequiv} is actually independent of $a$. Given the constraint  $\mat{W}^\top \mat{W} = \mat{I}_m$, there exists a unitary matrix \( \mat{U} \in \mathbb{R}^{d \times d} \) such that
\[
\mat{W} = \mat{U} 
\begin{bmatrix}
\mat{I}_m \\
0
\end{bmatrix},
\]
i.e., \( \mat{W} \) consists of the first \( m \) columns of \( \mat{U} \). The optimization in~\eqref{eq:argminlogequiv} is equivalent to one on $\mat{U}$ --the first $m$ columns of it that is. Note the eigenvalue decomposition of $\mat{W}\, \mat{W}^\top =  \mat{U} \mat{D} \mat{U}^\top$, where $\mat{D} \in \mathbb{R}^{d \times d}$ 
\[\mat{D} = 
\begin{bmatrix}
\mat{I}_m & 0 \\
0 & 0
\end{bmatrix},\] 
is a diagonal matrix that is {\em singular} since the rank of $\mat{W}$ is equal to $m < d$. 
Considering the expectation term in equation~\eqref{eq:argminlogequiv}, we write
\begin{align}
\Ep{\bf G} { \ln \left( 1 + a \snorm{ {\bf G} - \mat{W}\,\mat{W}^\top {\bf G} } \right) } & = 
\Ep{\bf G} { \ln \left( 1 + a \snorm{ {\bf G} - \mat{U}\mat{D}\mat{U}^\top {\bf G} } \right) } \nonumber \\
& = \Ep{\bf G} { \ln \left( 1 + a \snorm{ \mat{U}^\top {\bf G} - \mat{D}\mat{U}^\top {\bf G} } \right) } \label{eq:unitary}\\
& = \Ep{\bf G'} { \ln \left( 1 + a \snorm{ {\bf G}' - \mat{D} {\bf G}' } \right) } \label{eq:gprime}.
\end{align}
Equation~\eqref{eq:unitary} is justified since $\mat{U}^\top$ is a unitary matrix and in equation~\eqref{eq:gprime} we define ${\bf G}' = \mat{U}^\top {\bf G}$. The first $m$ diagonal elements of $\mat{D}$ are equal to $1$ and the last $(d - m)$ elements are equal to $0$. 

Equation~\eqref{eq:gprime} can be written as
\begin{equation*}
    \Ep{\bf G'}{ \ln \left( 1 + a \sum_{m+1}^d G^{'2}_i \right) },
\end{equation*}
and the minimization problem~\eqref{eq:argminlogequiv} becomes 
\begin{equation}
\label{eq:argmin}
    \underset{\mat{U}}{\arg \min} \,\, \Ep{\bf G'}{ \ln \left( 1 + a \sum_{m+1}^d G^{'2}_i\right) },
\end{equation}
over all unitary matrices and where ${\bf G'} = \mat{U}^\top {\bf G}$. For  $(m + 1) \leq i \leq d$ define $G'_i=\sigma_i G''_i$ where $G''_i \sim\mathcal{N}(0,1)$ and $\sigma_i$ represents the standard deviation of $i$-th diagonal element of $\Sigma'$, the covariance matrix of ${\bf G'}$. 
Rewriting equation~\eqref{eq:argmin}, we get the following equivalent problem:
\begin{align*}
    & \underset{{\sigma_i, m+1 \leq i \leq d}}{\arg \min}\,\, f(\sigma_{m+1}, \cdots, \sigma_d) \\
    \text{where } \qquad & f(\sigma_{m+1}, \cdots, \sigma_d) \triangleq \Ep{\bf G^{''}} {\ln \left( 1 + a \sum_{m+1}^d \sigma_i^2 {G''}_i^2 \right) }.
\end{align*}

Taking the partial derivative with respect to some $\sigma_{\ell}$, $(m+1) \leq \ell \leq d$, we obtain
\begin{equation}
    \label{eq:partialder}
    \frac{\partial f}{\partial \sigma_\ell} = \Ep{{\bf G}^{''}} {\frac{2\, a\, \sigma_\ell{G''}_\ell^2}{1  + a\sum_{m+1}^d{\sigma_i^2G''}_i^2}} \geq 0,
\end{equation}
which are non-negative for all $(m+1) \leq i \leq d$. Therefore, $f(\sigma_{m+1}, \cdots, \sigma_d)$ is increasing with respect to each $\sigma_i$, 
 and this in turn, implies that $\mat{U}$ has to be chosen in such a way that the last $(d - m)$ marginal variables $G^{'}_i$, $(m+1) \leq i \leq d$  of ${\bf G}^{'} = \mat{U}^{\top} {\bf G}$ have the lowest variances among all the $d$ variables of ${\bf G}^{'}$. Among all orthogonal transforms $\mat{U}$, this is achieved when $\mat{U}$ is chosen as the eigenvector matrix of the covariance matrix $\Sigma$ of ${\bf G}$, with eigenvalues ordered decreasingly (Fan's maximal principle~\cite[Thm. 3.13]{meckes2019matrix}). 
 
 Since 
\(
\mat{W} = \mat{U} 
\begin{bmatrix}
\mat{I}_m &
0
\end{bmatrix}^\top
\), this implies that 
\[{\bf V} = \mat{W}^\top {\bf X} = \begin{bmatrix}
\mat{I}_m &
0
\end{bmatrix} \mat{U}^\top {\bf X} = A^{1/2} \begin{bmatrix}
\mat{I}_m &
0
\end{bmatrix} \mat{U}^\top {\bf G}.\] 

 The retained directions of the $m$-dimensional subspace are the eigenvectors corresponding to the highest eigenvalues of $\Sigma$, thus leaving out the $(d - m)$ eigenvectors corresponding to the lowest eigenvalues. 
\end{proof}

\section{Methods}
\label{sec:method}
\subsection{Estimating the Covariance Matrix}
\label{sec:estimate}
 According to the results of the previous section, performing a reduction in dimensionality for sub-Gaussian data under the logarithmic loss defined in~\eqref{eq:logcost} is optimally done by finding the eigenvectors of the covariance matrix of the underlying Gaussian vector, and hence it is necessary to estimate this matrix from the heavy-tailed data. This is directly related to finding the directions for the elliptical data.

Next, we propose two different methods to estimate the {\em shape} matrix of the sub-Gaussian data or, equivalently, the covariance matrix of the underlying Gaussian vector $\Sigma$. For the remainder of this section, we assume, WLoG, that the data are zero-centered, and we present the methods that can be used to estimate $\rho_{ij}$, the correlation of $G_i$ and $G_j$.  

\subsubsection{First method: Ratio of the marginals}
Starting with a sub-Gaussian vector ${\bf X} = A^{\frac{1}{2}} {\bf G}$, we form $\frac{d(d-1)}{2}$ ratios by dividing the row variables $X_i/X_j$, pairwise, $1 \leq i <  j \leq d$, which will eliminate $A^{\frac{1}{2}}$ since it is a common factor of both terms. The resultant variable is a ratio of two Gaussian variables, which is well-known to follow a Cauchy distribution $\mathcal{C}(\mu_{ij},\gamma_{ij})$~\cite{curtiss1941distribution} where 
\begin{equation}
  \mu_{ij} = \rho_{ij} \frac{\sigma_i}{\sigma_j}, \qquad \gamma_{ij} = \frac{\sigma_i}{\sigma_j}\sqrt{1-\rho_{ij}^2}, \label{eq:divcau}
\end{equation}
where $\sigma_i$, $\sigma_j$, and $\rho_{ij}$ represent the standard deviations and the correlation coefficient between the numerator and denominator- the $i$-th and $j$-th entries of ${\bf G}$ respectively. 

We start by estimating the parameters of the $\frac{d(d-1)}{2}$ Cauchy variables $\mathcal{C}(\mu_{ij},\gamma_{ij})$, $1 \leq i <  j \leq d$. This can be done using the {\em power estimators} and {\em location estimators} proposed in~\cite{sayde2025heavy}. Once that $\left\{\mu_{ij}\right\}$'s and $\left\{\gamma_{ij}\right\}$'s are computed, in order to find the $\left\{ \rho \right\}_{ij}$'s, it remains to compute the ratio $\sigma_i/\sigma_j$. Multiple methods can be applied to this end. 
Once more, one can use, for example, the power estimator of~\cite{sayde2025heavy} and calculate the {\em power} of the marginal random variables $X_i$ and $X_j$, denoted by $\gamma_i$ and $\gamma_j$ from the marginal observed heavy-tailed data. Since the power or ``scale" of $X_i = A^{\frac{1}{2}} G_i$ is proportional to $\sigma_i$, an estimate of the ratio $\frac{\sigma_i}{\sigma_j}$ is found using the fact that $\frac{\sigma_i}{\sigma_j} = \frac{\gamma_i}{\gamma_j}$. More details about the power and location definitions are found in~\cite{sayde2025heavy,FAF2018}. 

After estimating $\mu_{ij}$, $\gamma_{ij}$, and $\sigma_i/\sigma_j$,  and when it comes to evaluating the $\{\rho_{ij}\}$'s, we can proceed using one of the following 3 different formulas that follow directly from~\eqref{eq:divcau}:

\begin{eqnarray}
    \rho_{ij} &=&\frac{\mu_{ij}}{\sqrt{\mu_{ij}^2+\gamma_{ij}^2}} \label{eq:method1a}\\
    \rho_{ij} &=& \text{sgn}(\mu_{ij})\sqrt{1-\frac{\sigma_j^2}{\sigma_i^2}\gamma_{ij}^2} \label{eq:method1b}\\
    \rho_{ij} &=& \mu_{ij} \frac{\sigma_j}{\sigma_i} 
\label{eq:method1c}
\end{eqnarray}

In Section~\ref{sec:numres}, we numerically compare the accuracy of the three previous formulas to compute the $\{\rho_{ij}\}$s. 

\subsubsection{Second method: The log-correlation}

We define the log-correlation between two random variables as being the average of the product of their logarithm. More specifically, let $X$, $Y$ be two random variables, their log-correlation is given by: 
\begin{equation}
\label{eq:logcorr}
\ell_{XY} = \text{E}\left[\log{|X|}\log{|Y|}\right].
\end{equation}

The log correlation is finite for a large class of distributions where the standard correlation does not exist, such as the class of power laws with characteristic exponent $\alpha < 2$.  

This second method is based on the observation that there is a one-to-one map (up to a sign) between the log-correlation $\ell_{G_{i}G_{j}}$ and the correlation coefficient $\rho_{ij}$ between $G_i$ and $G_j$, $1 \leq i < j \leq d$, elements of the $d$-dimensional Gaussian vector ${\bf G} \sim\mathcal{N}(0,\Sigma)$. As such, we evaluate $\ell_{G_{i}G_{j}}$ by estimating the log-correlation $\ell_{X_{i}X_{j}}$ of the observed data, and use a precomputed lookup table that maps the value $\ell_{G_{i}G_{j}}$ to the corresponding $\left|\rho_{ij}\right|$. In what follows, we assume WLoG that all $G_i$s have unit variance. In the general case, one can estimate the variances of the marginals using the power estimator and normalize the entries accordingly. 

To compute the empirical log-correlation coefficient, we first take the log of the data vector in absolute value, then we multiply the outcomes pairwise. We get 
\begin{align}
\log{|X_i|} \, \log{|X_j|}
& = \log \left( A^\frac{1}{2}|G_i| \right) \log \left( A^\frac{1}{2}|G_j| \right)\nonumber\\
& = \left[ \frac{1}{2}\log{A}+\log{|G_i|} \right] \left[ \frac{1}{2}\log{A}+\log{|G_j|} \right] \nonumber\\
& = \frac{1}{4} \left[ \log{A} \right]^2 + \frac{1}{2} \big[ \log{|G_i|} + \log{|G_j|} \big] \log{A} + \log{|G_i|} \, \log{|G_j|}. \label{eq:logcorr1}
\end{align}
Taking the expected value in equation~\eqref{eq:logcorr1}, we get for all $1 \leq i < j \leq d$,
\begin{align*}
    \E{ \log{|X_i|} \, \log{|X_j|}} = \frac{1}{4} \E{ \left[ \log{A} \right]^2} + \E{ \log{A} } \E{ \log{|G_i|} } + \E{ \log{|G_i|} \, \log{|G_j|}},
\end{align*}
where we used the fact that $A$ and ${\bf G}$ are independent and that $\E{\log{|G_i|})} = \E{\log{|G_j|}}$ since $G_i \sim G_j$ by assumption.
Rearranging the terms, we obtain
\begin{equation}
    \E{\log{|G_i|} \, \log{|G_j|}} = \E{ \log{|X_i|} \, \log{|X_j|} } - \frac{1}{4} \E{\left[ \log A \right]^2 } - \E{ \log{A} } \E{ \log{|G_i|} }. \label{eq:lookup}
\end{equation}

The RHS of equation~\eqref{eq:lookup} is computed as follows: $\E{\log{|X_i|}\log{|X_j|}}$ can be calculated as an empirical mean of the pairwise product of the log of the marginal data. For a given $A$ and for unit variance $\{G_i\}$'s, the remaining two terms are independent of the data and of $\rho_{ij}$. Finally, a lookup table is computed that gives the value of $|\rho_{ij}|$ for each value of $\E{\log{|G_i|}\log{|G_j|}}$. 

We have computationally built such a table by evaluating the logarithmic correlation $\E{\log{|G_1|}\log{|G_2|}}$ between two correlated Gaussian variables with zero mean and unit variance. A step size of $10^{-5}$ was used for values of $\rho \in [0,1]$.

\subsection{Numerical Example}
\label{sec:numres}

In this section, we validate and compare the performance of the two estimation methods proposed in Section~\ref{sec:estimate}. We consider a $2$-dimensional setup where we generate data according to an elliptical Cauchy distribution $\mathbf{\mathcal{C}} = A^{\frac{1}{2}} {\bf G}$, where $A \sim \mathcal{S} \left(
\frac{1}{2}, 1, \frac{1}{2}, 0 \right)$ is a L\'evy distribution --alpha-stable with $\alpha = \frac{1}{2}$, and ${\bf G} \sim \mathcal{N}(0,\Sigma)$. As an example, we consider the following covariance matrix
\begin{equation}
    \Sigma = \begin{pmatrix}
                16 & 8 \rho \, \\ 8 \rho \,  & 4
    \end{pmatrix}
    \label{eq:estrho}
\end{equation}
and we sample $n = 800$ Cauchy vectors. We repeat the experiment $N_{\text{runs}} = 200$ times and apply the first and second methods to get an average estimate of $\Sigma$. The experiment is repeated for values of $\rho$ ranging from $0.1$ to $0.9$ where we evaluated the percentage relative error in the estimates of $\rho$. 

\begin{figure}[!htp]
    \centering

    \subfloat[First method: all equations.]{%
        \includegraphics[width=0.48\linewidth]{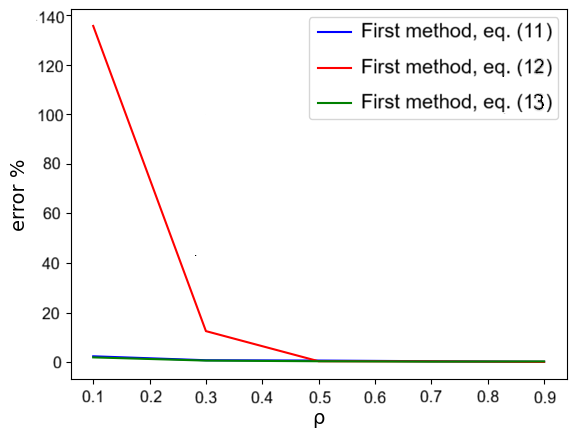}}
    \hfill
    \subfloat[Figure (a) for $\rho \geq 0.5$.]{%
        \includegraphics[width=0.48\linewidth]{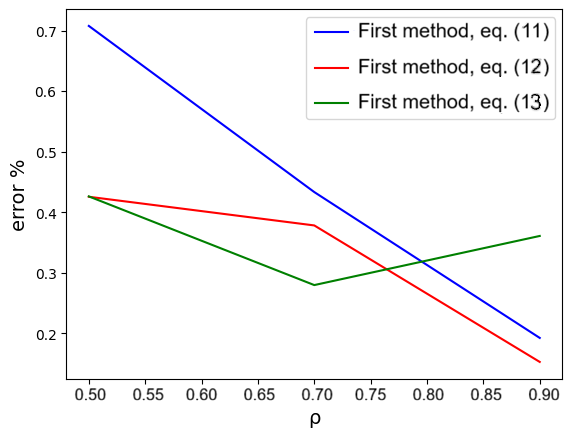}}

    \vspace{1mm}

    \subfloat[First method: equations~\eqref{eq:method1a} and~\eqref{eq:method1c}.]{%
        \includegraphics[width=0.48\linewidth]{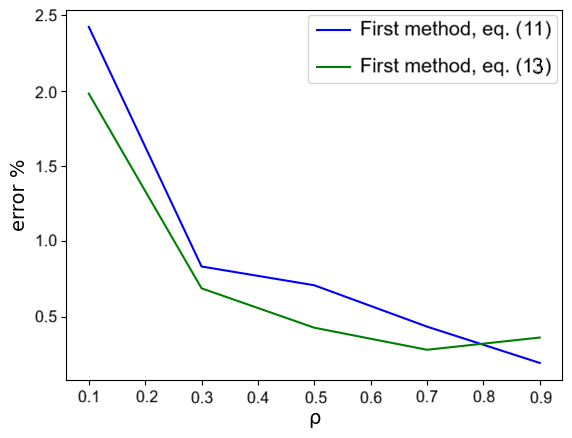}}
    \hfill
    \subfloat[First vs. second methods.]{%
        \includegraphics[width=0.48\linewidth]{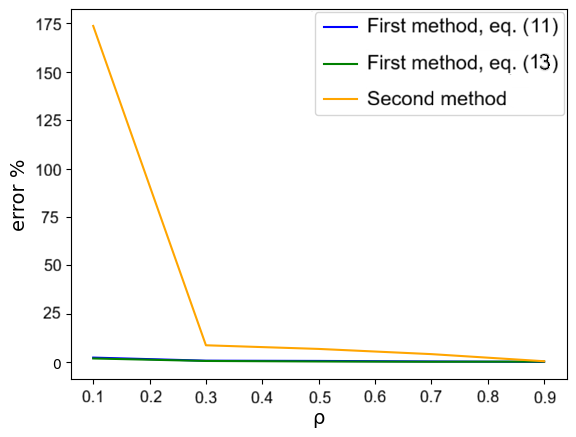}}

    \vspace{1mm}

    \subfloat[Figure (d) for $\rho \geq 0.3$.]{%
        \includegraphics[width=0.48\linewidth]{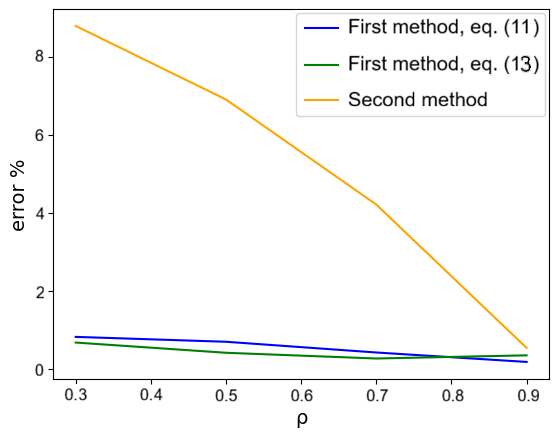}}

    \caption{Estimating $\rho$ in~\eqref{eq:estrho} using first and second methods.}
    \label{fig:mthd123}
\end{figure}

Figure~\ref{fig:mthd123}(a) clearly shows that using equation~\eqref{eq:method1b}  is inaccurate when estimating $\rho$ for values lower than $0.5$; this may be due to the inaccuracies in estimating the scale $\gamma$. On the other hand, Figure~\ref{fig:mthd123}(c)  shows that using the first method with either equation~\eqref{eq:method1a} or~\eqref{eq:method1c} is accurate on the full range with a maximum relative error of $2.4\%$ and $2\%$, respectively. We also note that the accuracy of the estimation obtained using equation~\eqref{eq:method1c} of the first method slightly outperforms equation~\eqref{eq:method1a} for values of $\rho$ below $0.8$ as shown in Figure~\ref{fig:mthd123}(c). The opposite is true for $\rho > 0.8$.

We also compare the log-correlation method (second method) with the first one. As shown in Figure~\ref{fig:mthd123}(d), the performance of the second method severely deteriorates for small values of $\rho$ (less than $0.3$). For values of $\rho \geq 0.5$, the error is in the range $[1\%;7\%]$.
According to these results, we recommend using equation~\eqref{eq:method1a} or equation~\eqref{eq:method1c} of the first method as they are the most accurate over a wide range of values of $\rho$.

We next compare our method --using equation~\eqref{eq:method1c} of the first method-- against two widely-used covariance estimators:
the empirical covariance (classical PCA) and Tyler’s scatter estimator. 
For consistency, we keep the same experimental setup as above and evaluate 
the various methods over the full range $\rho \in [0.1,0.9]$. 

When using Cauchy data (shown in Figure~\ref{fig:heavy_gaussian_compare}(a)), the empirical covariance was the most unstable where 
its error decreases from more than $30\%$ when $\rho=0.1$ to around $2\%$ at $\rho=0.9$. Tyler’s estimator is more stable but still shows errors between $3\%$ and $7\%$ for moderate correlations. In contrast, our method maintains an error below approximately $3\%$ for all values of $\rho$, and below $1\%$ once $\rho$ increases $\rho \ge 0.3$.

When the data are less heavy-tailed ($\alpha = 1.7$), the 
performance gap narrows as we can see in Figure~\ref{fig:heavy_gaussian_compare}(b) and Tyler's method starts showing robustness.  
The empirical covariance still exhibits high errors at small values of $\rho$ that is, around $20\%$ at $\rho = 0.1$. Our method remains the most accurate across the entire range, with errors below $2\%$ and already under $1\%$ for $\rho \ge 0.3$.

In the Gaussian setting, all three methods perform similarly well as shown in Figure~\ref{fig:heavy_gaussian_compare}(c), the error is roughly below $3.5\%$ even at $\rho = 0.1$, and drops under $0.5\%$ for 
$\rho \ge 0.5$. Here, the empirical covariance is slightly more accurate, as expected.

Our method is seen to preserve performance under Gaussian data while offering a clear advantage in heavy-tailed regimes ($\alpha$ $<$ 2), thus providing a robust and broadly applicable alternative to the classical covariance estimator and Tyler’s estimator. 

\begin{figure}[!t]
    \centering

    \subfloat[Cauchy data.]{%
        \includegraphics[width=0.48\linewidth]{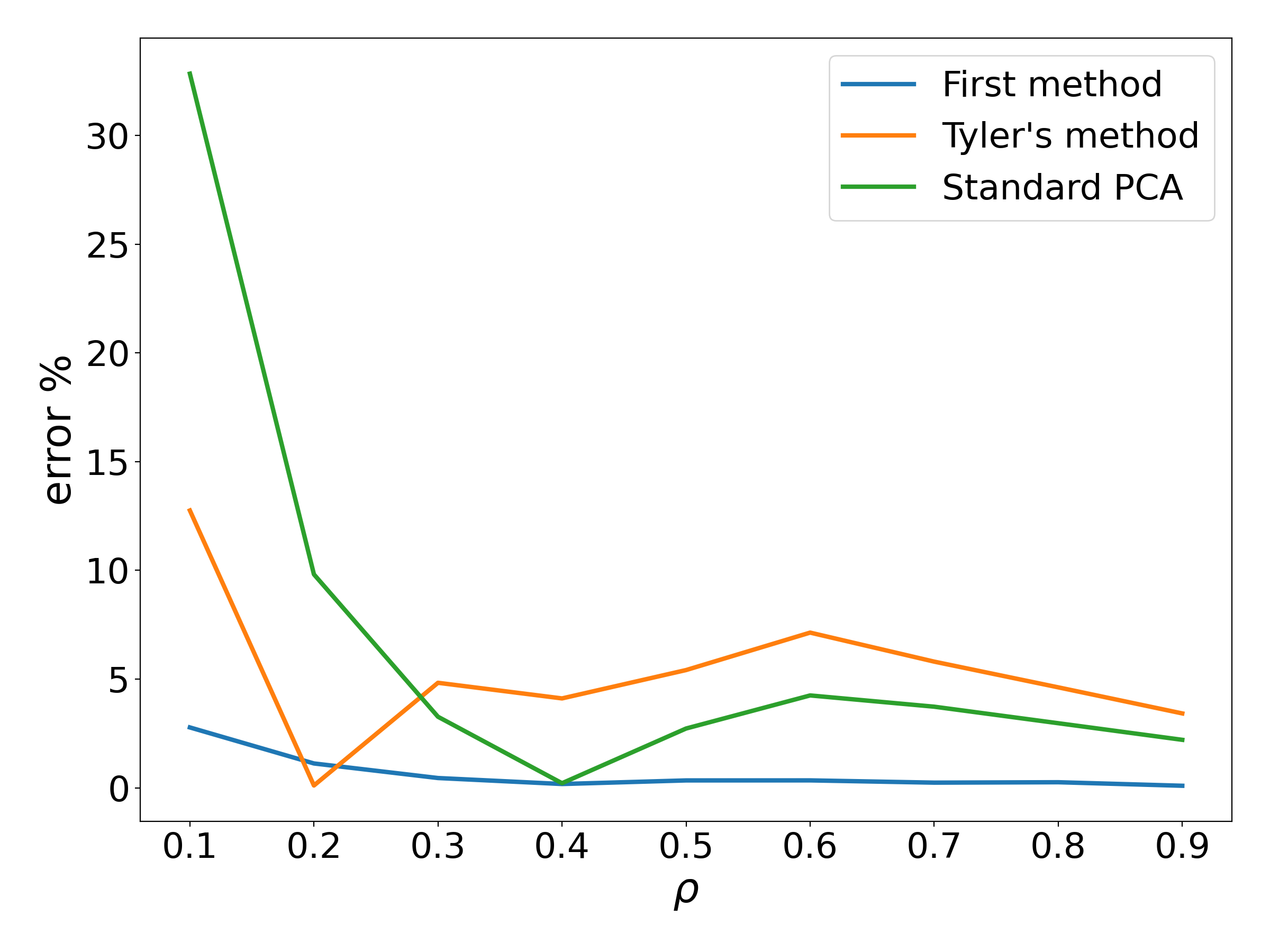}}
    \hfill
    \subfloat[$\alpha$-stable data with $\alpha = 1.7$.]{%
        \includegraphics[width=0.48\linewidth]{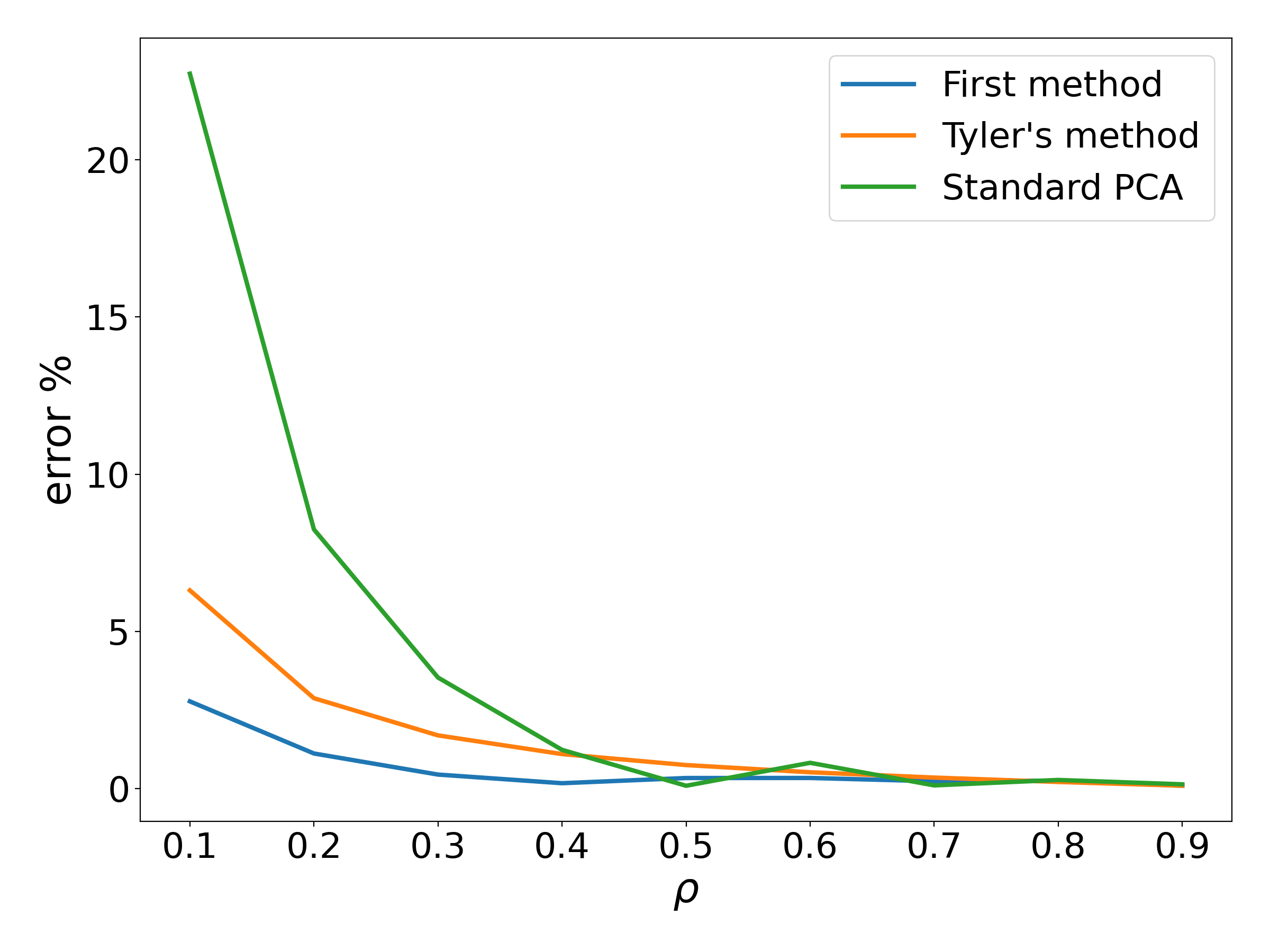}}

    \vspace{1mm}

    \subfloat[Gaussian data.]{%
        \includegraphics[width=0.48\linewidth]{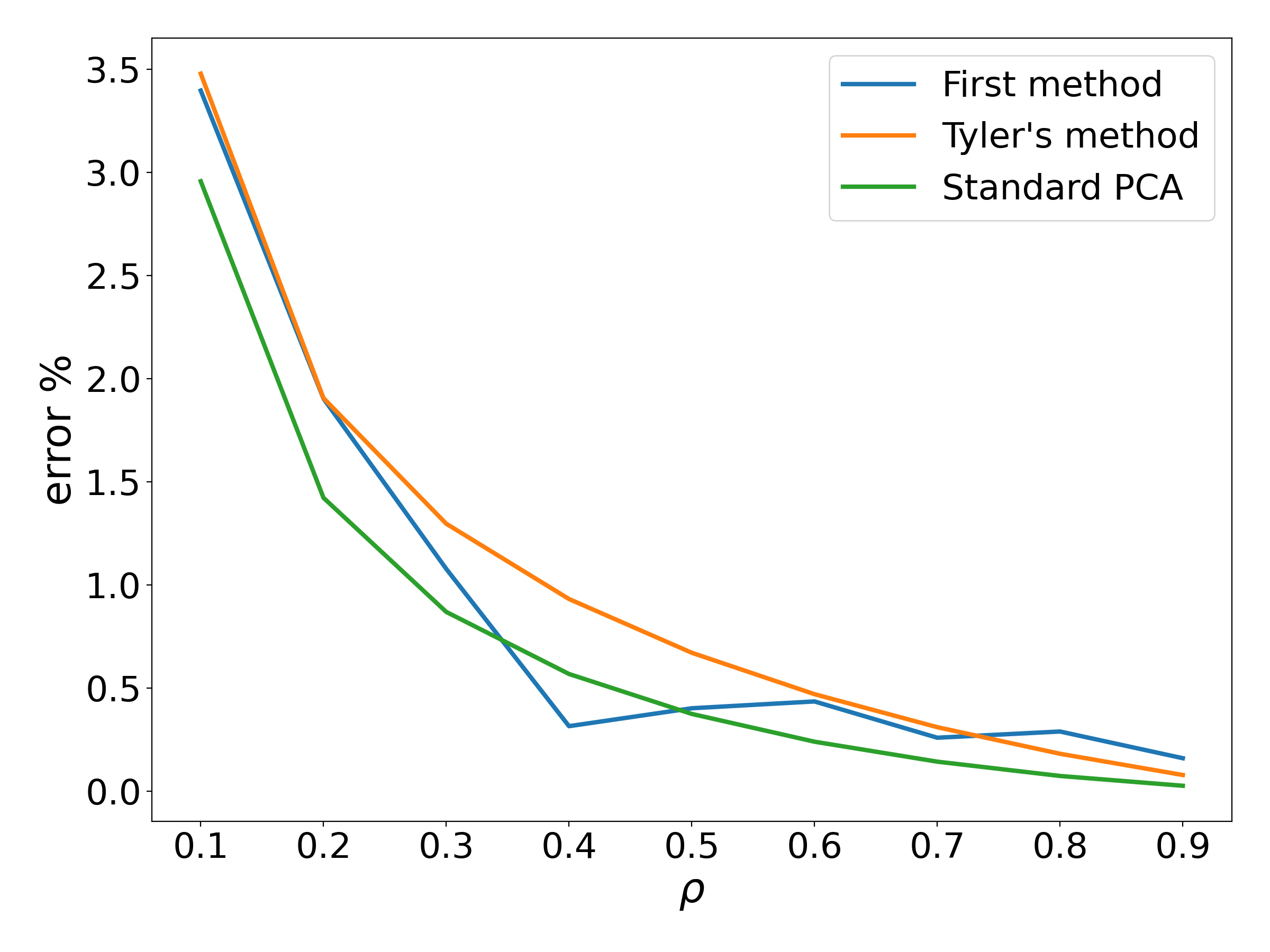}}

    \caption{Comparison between our method,
    Tyler and the PCA's covariance estimator under heavy-tailed and Gaussian data.}
    \label{fig:heavy_gaussian_compare}
\end{figure}

\subsubsection*{Another Example}

In another example, we generate two-dimensional data sampled from a sub-Gaussian vector with $\alpha = 0.7$, and $\Sigma = RDR$ where $R=\begin{pmatrix}
    1&0.8\\0.8&1
\end{pmatrix}$
and $D= \begin{pmatrix}
    1&0\\
    0&0.4
\end{pmatrix}$. In Figure~\ref{fig:PCvisualization}, we depict the first principal component (PC1) obtained using equation~\eqref{eq:method1c} in addition to the one recovered by the classical PCA for the sub-Gaussian data. The cosine similarity between the true PC1 and that estimated by equation~\eqref{eq:method1c} is approximately 0.9993, whereas it is around $0.7149$ for the classical PCA and the true PC1. This discrepancy highlights the sensitivity of the sample covariance to heavy-tailed data, whereas the proposed estimator reliably recovers the dominant eigen direction, especially under sub-Gaussian regimes.

\subsubsection*{More than two dimensions}

Finally, we test our method on $n = 1,000$ data samples generated according to a 3-dimensional sub-Gaussian Cauchy vector with a covariance matrix $\Sigma$ of the underlying Gaussian vector given by:

\[\Sigma = 
\begin{pmatrix}
1 &0.9&0.5 \\
0.9&1 & 0.2\\
0.5&	0.2&	1
\end{pmatrix}.\]
 The experiment is repeated $N_{\text{runs}} = 400$ times. The resulting bias and RMSE for each entry in the matrix ($\hat{\Sigma} - \Sigma$ ) are found to be in the range $[0.005;0.0095]$ and $[0.05;0.12]$ respectively. 
 
 \[\text{Bias} =\begin{pmatrix}
0.0048 & 0.0077 & 0.0056 \\
0.0077 & 0.0095 & 0.0052 \\
0.0056 & 0.0052 & 0.0061
\end{pmatrix} \qquad 
\text{RMSE} = 
\begin{pmatrix}
0.12 & 0.11 & 0.077 \\
0.11 & 0.12 & 0.052 \\
0.077 & 0.052 & 0.12
\end{pmatrix}
\]

    \begin{figure}[!t]
    \centering
    \includegraphics[width=0.70\textwidth]{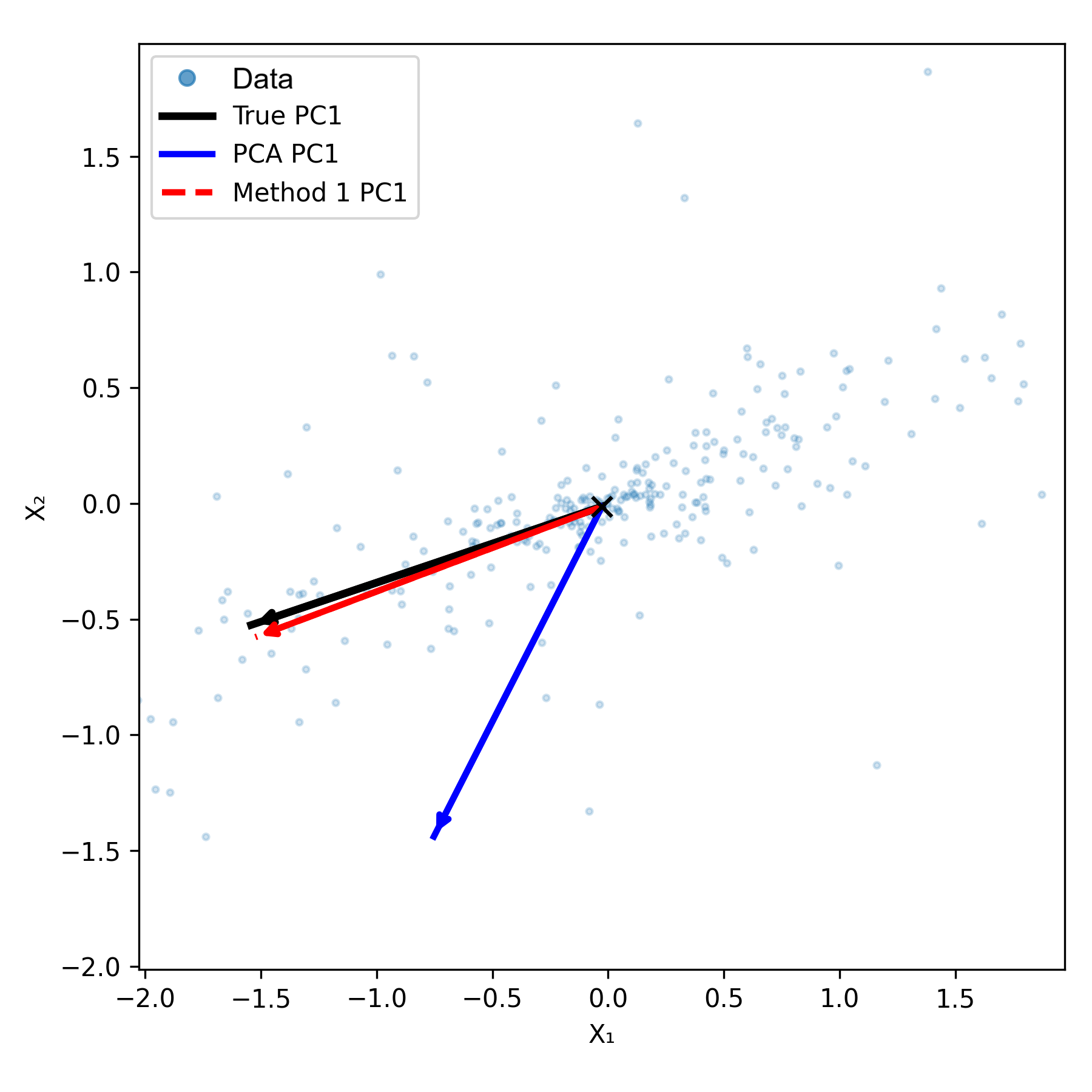}
    \caption{Visualization of the first principal component (PC1) obtained using equation~\eqref{eq:method1c} and standard PCA, compared with the true PC1 of the sub-Gaussian data.}
    \label{fig:PCvisualization}
\end{figure}

\section{Experiments and Results: Background denoising}
\label{sec:ExpRes}
\subsection{MNIST images}
40 MNIST images of the digits ``0" and  ``8" --with $784$ pixels each-- were corrupted with additive heavy-tailed noise. The 40 selected images were chosen in such a way as to highly resemble each other. First, we convert the 40 images into a single matrix $(784 \times 40)$ denoted by $\mat{X}$. Then, we added $40$ independent heavy-tailed ``superstatistical" correlated noises to each of the $40$ figures. More specifically, we generated $40$ independent sub-Gaussian random vectors of the form $A^{\frac{1}{2}} {\bf G}$, each with dimension $d = 784$, which are configured into a noise matrix $\mat{N}$ whose dimensions are $(784 \times 40)$, and we added $\mat{N}$ to $\mat{X}$. 

Instead of computing the mean (which might not be well defined for heavy-tailed data), we evaluated the location parameter proposed in~\cite{FAF2018,sayde2025heavy} for each pixel across the 40 pictures, then we constructed an image of ``locations" representing an ``average" image, which we subtracted from each of the 40 images.
We then reconstruct each image by rank-$k$ projection onto components learned from the noisy set using either the classical PCA or the proposed heavy-tailed PCA. More specifically, for the heavy-tailed PCA, we estimate the covariance matrix $\Sigma$ of the noisy images using the first method presented in Section~\ref{sec:method} and using equation~\eqref{eq:method1a}. In all the results shown subsequently, we only retain the largest $k = 10$ principal components.

We conducted this experiment for two different models of heavy-tailed noise,
\begin{itemize}
    \item a sub-Gaussian Cauchy and 
    \item a Student-t with DoF $\nu = 1.2$
\end{itemize}
vectors, both with an underlying IID Gaussian vector with covariance matrix $\Sigma = 10 \,\, \mat{I}_{784}$\footnote{where $\mat{I}_m$ denotes the $m\times m$ identity matrix}. The simulation results are shown in Figures~\ref{fig:0s} and~\ref{fig:8st}. Both figures show that the proposed heavy-tailed approach achieves a higher-fidelity denoising, indicating superior suppression of large impulsive deviations. Visually, heavy-tailed PCA removes ``salt-and-pepper" artifacts and yields cleaner backgrounds and sharper strokes as shown in Figures~\ref{fig:0hr_s} and \ref{fig:8hr_st}, while classical PCA leaves residual speckles that intensify as the tails grow heavier, as depicted in Figures~\ref{fig:0sr_s} and~\ref{fig:8sr_st}. 

Finally, the heavy-tailed PCA is not specific to potential heavy-tailed noise, as it performs well even under ``thin-tailed" noise. In Figure~\ref{fig:8g}, we insert an IID Gaussian noise with $\Sigma = 10 \,\, \mat{I}_{784}$ into the original image and apply a standard and heavy-tailed PCA reconstruction. The results are visually similar, as can be seen in Figures~\ref{fig:8sr_g} and~\ref{fig:8hr_g}.

\begin{figure*}[!t]
    \centering

    \subfloat[Original image.]{%
        \includegraphics[width=0.23\textwidth,keepaspectratio]{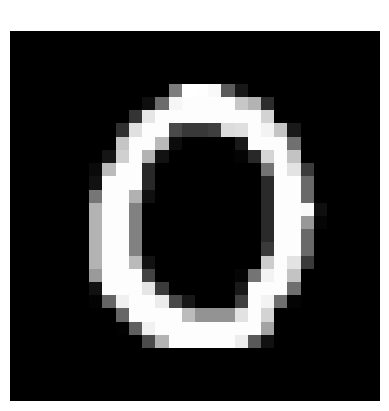}
        \label{fig:0og_s}
    }
    \hfill
    \subfloat[Noisy image with Cauchy noise.]{%
        \includegraphics[width=0.23\textwidth]{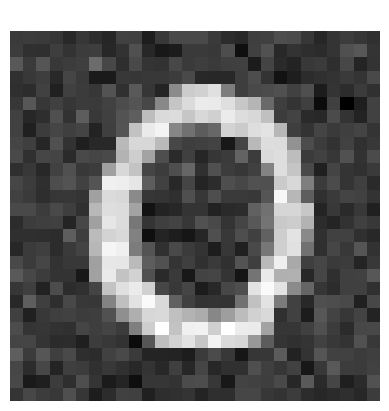}
        \label{fig:0n_s}
    }
    \hfill
    \subfloat[Standard PCA reconstruction.]{%
        \includegraphics[width=0.23\textwidth]{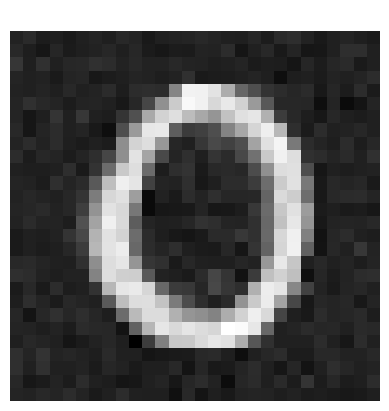}
        \label{fig:0sr_s}
    }
    \hfill
    \subfloat[Heavy-tailed PCA reconstruction.]{%
        \includegraphics[width=0.23\textwidth]{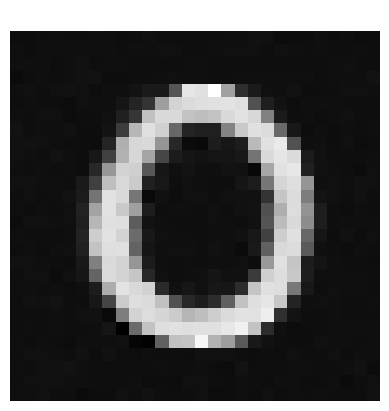}
        \label{fig:0hr_s}
    }

    \caption{Image reconstruction under Cauchy noise. From left to right: original image, noisy image, standard PCA reconstruction, and heavy-tailed PCA reconstruction.}
    \label{fig:0s}
\end{figure*}

\begin{figure*}[!t]
\centering

\subfloat[Original image.]{%
    \includegraphics[width=.23\textwidth,keepaspectratio]{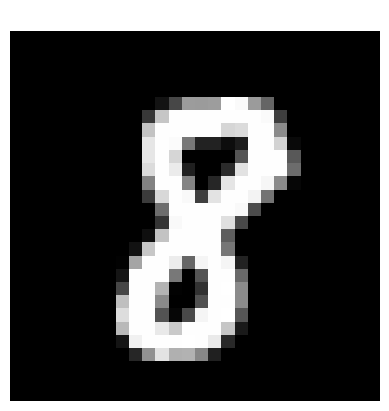}
    \label{fig:8og_st}
}
\hfill
\subfloat[Noisy image with Student-t noise.]{%
    \includegraphics[width=.23\textwidth]{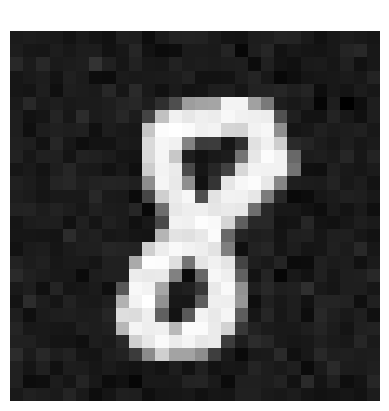}
    \label{fig:8n_st}
}
\hfill
\subfloat[Standard PCA reconstruction.]{%
    \includegraphics[width=.23\textwidth]{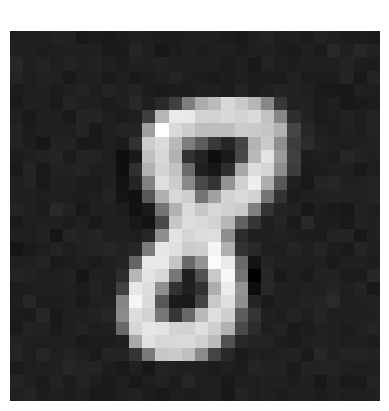}
    \label{fig:8sr_st}
}
\hfill
\subfloat[Heavy-tailed PCA reconstruction.]{%
    \includegraphics[width=.23\textwidth]{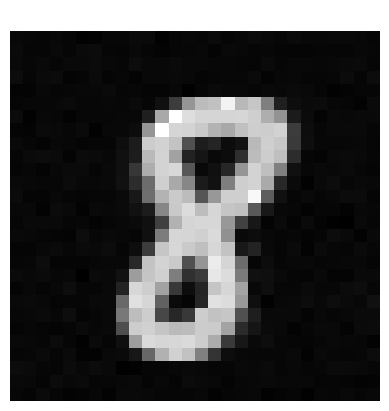}
    \label{fig:8hr_st}
}

\caption{Image reconstruction under Student-\(t\) noise with \(\nu=1.2\) degrees of freedom. From left to right: original image, noisy image, standard PCA reconstruction, and heavy-tailed PCA reconstruction.}
\label{fig:8st}
\end{figure*}

\begin{figure*}[!t]
\centering

\subfloat[Original image.]{%
    \includegraphics[width=.23\textwidth,keepaspectratio]{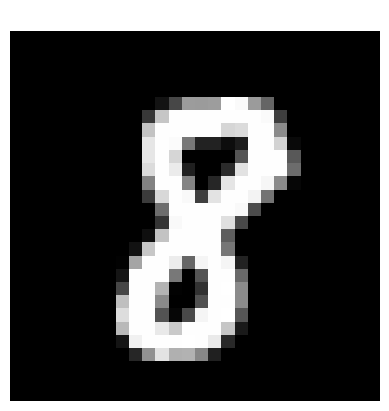}
    \label{fig:8og_g}
}
\hfill
\subfloat[Noisy image with Gaussian noise.]{%
    \includegraphics[width=.23\textwidth]{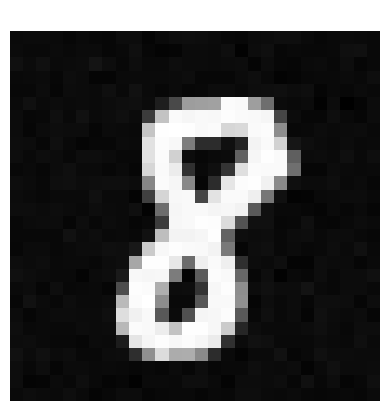}
    \label{fig:8n_g}
}
\hfill
\subfloat[Standard PCA reconstruction.]{%
    \includegraphics[width=.23\textwidth]{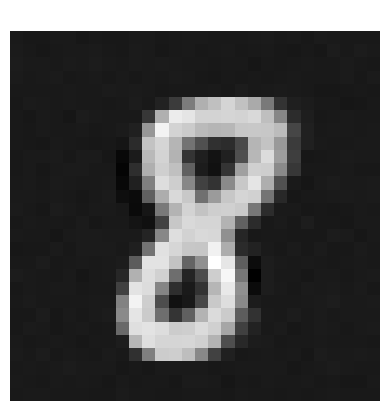}
    \label{fig:8sr_g}
}
\hfill
\subfloat[Heavy-tailed PCA reconstruction.]{%
    \includegraphics[width=.23\textwidth]{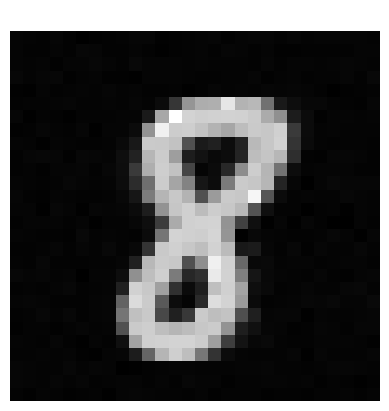}
    \label{fig:8hr_g}
}

\caption{Image reconstruction under Gaussian noise. From left to right: original image, noisy image, standard PCA reconstruction, and heavy-tailed PCA reconstruction.}
\label{fig:8g}
\end{figure*}

\subsection{Video data}
We carried out a controlled experiment on actual video data to assess the robustness of heavy-tailed PCA for background extraction and denoising. To create a small but difficult setting where noise and compression artifacts are more noticeable, a brief video was taken from YouTube, converted to grayscale, and then down-sampled to a low spatial resolution of 64 × 48. First, we convert 150 frames (corresponding to a 5 seconds duration with a frame-rate of 30 frames per second) into a $\left(3072 \times 150\right)$ matrix denoted $\mat{X}$.
Then, we generated 150 independent sub-Gaussian (Cauchy) random vectors, each with dimension d = 3072 and whose underlying Gaussian distribution has a covariance matrix $\Sigma =  0.02\times \mat{I}_{3072}$ that are configured to a noise matrix $\mat{N}$ whose dimensions are $\left( 3072 \times 150\right)$. Finally, we add $\mat{N}$ to $\mat{X}$.

Similarly to the MNIST application, we did not compute the mean of the frames, but instead constructed a frame of locations~\cite{sayde2025heavy,FAF2018} representing an average frame, which we subtracted for all frames. We then reconstruct each image by rank-$k$ projection onto components learned from the noisy set using either the classical PCA or the proposed heavy-tailed PCA. More specifically, for the heavy-tailed PCA, we estimate the covariance matrix $\Sigma$ of the noisy images using equation~\eqref{eq:method1a} of the first method presented in Section~\ref{sec:method}. We only retain the  principal component corresponding to the largest eigenvalue ($k = 1$).
The simulation results are shown in Figure \ref{fig:frame1pca} for two different frames. Both results show that the proposed heavy-tailed PCA removes noise artifacts and provides a ``cleaner" background extraction. 

Overall, these results demonstrate that the heavy-tailed PCA provides a more robust representation of the underlying video structure in the presence of significant noise and compression artifacts. In particular, its superior performance at low ranks highlights its effectiveness for background extraction and denoising in challenging, low-resolution video settings where classical PCA struggles.

\section{Conclusions}
\label{sec:conc}
This work provides a principled and general treatment of PCA for high-dimensional data with potentially infinite variance. By adopting a logarithmic loss for the superstatistical Gaussian subordination framework, we show that robust dimensionality reduction can be achieved without resorting to ad hoc trimming, sparsity assumptions, or finite-moment conditions. Our theoretical analysis establishes that the principal components of heavy-tailed data are governed by the covariance structure of the latent Gaussian generator, thereby justifying the use of standard PCA once this covariance is consistently estimated. We propose practical estimators for this purpose and demonstrate, through simulations and background denoising experiments, their superior robustness and accuracy compared to both empirical covariance-based PCA and Tyler-type methods. Beyond PCA, the proposed framework opens new avenues for robust subspace learning, detection, and representation in modern data regimes characterized by heavy tails, dependence, and impulsive perturbations. Extensions to online settings, tensor-valued data, and connections with information-theoretic objectives such as entropy and divergence-based learning under heavy-tailed distributions represent some directions for future work.
\begin{figure*}[!t]
\centering

\subfloat[Original frame.]{%
    \includegraphics[width=.235\textwidth,keepaspectratio]
    {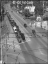}
    \label{fig:ogf1}
}
\hfill
\subfloat[Noisy frame with Cauchy noise.]{%
    \includegraphics[width=.235\textwidth]
    {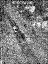}
    \label{fig:nf1}
}
\hfill
\subfloat[Standard PCA reconstruction $k=1$.]{%
    \includegraphics[width=.235\textwidth]
    {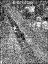}
    \label{fig:sp11}
}
\hfill
\subfloat[Heavy-tailed PCA reconstruction $k=1$.]{%
    \includegraphics[width=.235\textwidth]
    {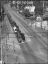}
    \label{fig:hp11}
}

\vspace{1mm}

\subfloat[Original frame.]{%
    \includegraphics[width=.235\textwidth,keepaspectratio]
    {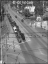}
    \label{fig:ogf8}
}
\hfill
\subfloat[Noisy frame with Cauchy noise.]{%
    \includegraphics[width=.235\textwidth]
    {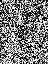}
    \label{fig:nf8}
}
\hfill
\subfloat[Standard PCA reconstruction $k=1$.]{%
    \includegraphics[width=.235\textwidth]
    {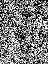}
    \label{fig:sp18}
}
\hfill
\subfloat[Heavy-tailed PCA reconstruction $k=1$.]{%
    \includegraphics[width=.235\textwidth]
    {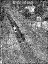}
    \label{fig:hp18}
}

\caption{Video-frame reconstruction under Cauchy noise for two different frames. The first and second rows correspond to frames 1 and 8, respectively. From left to right: original frame, noisy frame, standard PCA reconstruction with $k=1$, and heavy-tailed PCA reconstruction with $k=1$.}
\label{fig:frame1pca}
\label{fig:frame8pca}
\end{figure*}

\bibliographystyle{IEEEtran}
\bibliography{references,references1}

\end{document}